
\documentclass[10pt,twocolumn,letterpaper]{article}

\usepackage[pagenumbers]{cvpr} 

%
%
\usepackage[dvipsnames]{xcolor}
\usepackage{amsthm}


\definecolor{darkred}{rgb}{0.7,0.1,0.1}
\definecolor{darkgreen}{rgb}{0.1,0.7,0.1}
\definecolor{cyan}{rgb}{0.7,0.0,0.7}
\definecolor{dblue}{rgb}{0.2,0.2,0.8}
\definecolor{maroon}{rgb}{0.76,.13,.28}
\definecolor{burntorange}{rgb}{0.81,.33,0}
\definecolor{tealblue}{rgb}{0.212,0.459, 0.533}
\definecolor{darkyellow}{rgb}{0.6 , 0.5, 0.0}
\definecolor{mybrown}{rgb}{0.87058824, 0.56078431, 0.01960784}
\definecolor{myorange}{rgb}{0.835, 0.368, 0}
\definecolor{mygt}{rgb}{0.0078125 , 0.57421875, 0.40625}
\definecolor{mysp}{rgb}{0.84765625, 0.515625  , 0.0234375}
\definecolor{mypink}{rgb}{0.93359375, 0.62109375, 0.83984375}
\definecolor{mypurple}{rgb}{0.5372549 , 0.20588235, 0.99372549}
\definecolor{myblue}{HTML}{185ABC} 	
\definecolor{mygreen}{HTML}{137333}
\definecolor{carmine}{rgb}{0.59, 0.0, 0.09}
\definecolor{cornflowerblue}{rgb}{0.39, 0.58, 0.93}
\definecolor{brightmaroon}{rgb}{0.76, 0.13, 0.28}
\definecolor{pp}{rgb}{0.43921569, 0.18823529, 0.62745098}
\definecolor{rr}{rgb}{0.5254902 , 0.00784314, 0.12941176}
\definecolor{bb}{rgb}{0.0, 0.12549019607,0.57647058823}
\definecolor{yy}{rgb}{0.49803922, 0.3372549 , 0.0}
\definecolor{gg}{rgb}{0.02352941, 0.3372549 , 0.17647059}
\definecolor{highlightRowColor}{rgb}{0.95, 0.95, 1}

\newcommand{\eat}[1]{} 
%

\newcommand{\NAME}{StableDreamer\xspace}
\newcommand{\OURS}{\NAME}
\newcommand{\GS}{3D Gaussians\xspace}


\makeatletter
\usepackage{xspace}
\def\@onedot{\ifx\@let@token.\else.\null\fi\xspace}
\DeclareRobustCommand\onedot{\futurelet\@let@token\@onedot}

\def\eg{\emph{e.g}\onedot} 
\def\ie{\emph{i.e}\onedot}


\usepackage{amsmath,amsfonts,bm}









\def\eqref#1{eq.(\ref{#1})}









\def\1{\bm{1}}








\def\vx{{\bm{x}}}



\DeclareMathAlphabet{\mathsfit}{\encodingdefault}{\sfdefault}{m}{sl}
\SetMathAlphabet{\mathsfit}{bold}{\encodingdefault}{\sfdefault}{bx}{n}











\newcommand{\E}{\mathbb{E}}



%
\definecolor{cvprblue}{rgb}{0.21,0.49,0.74}
\usepackage[pagebackref,breaklinks,colorlinks,citecolor=cvprblue]{hyperref}
\usepackage{adjustbox}
\usepackage{multirow}

\newtheorem{prop}{Proposition}


\title{\NAME: Taming Noisy Score Distillation Sampling for Text-to-3D}

\author{
Pengsheng Guo \quad Hans Hao \quad Adam Caccavale \quad Zhongzheng Ren \quad Edward Zhang \\  Qi Shan \quad Aditya Sankar \quad Alexander G. Schwing \quad  Alex Colburn \quad Fangchang Ma \\
Apple
}

\begin{document}

\twocolumn[{
\renewcommand{\twocolumn}[1][]{#1}
\maketitle
\newcommand{\myimage}[2]{
    \begin{minipage}[t]{0.191\textwidth}
        \captionsetup[figure]{labelformat=empty} 
        \captionsetup{font=scriptsize,justification=centering}
        \includegraphics[width=\linewidth]{images/ours/#1}
        \vspace{-21pt}
        \captionof{figure}{#2}
    \end{minipage}
}
\setlength{\tabcolsep}{1pt}
\begin{tabular}{ccccc} 
\centering
\myimage{bluejay-sd.png}{blue jay standing on large\\ basket of rainbow macarons} 
& \myimage{rooster-sd.png}{a colorful rooster} 
& \myimage{bunny-sd.png}{a baby bunny sitting on top \\of a stack of pancakes} 
& \myimage{corgi-sd.png}{a corgi wearing a top hat} 
& \myimage{robot.jpg}{a DSLR photo of a humanoid robot using a laptop}
\\ 
\myimage{tarantula.png}{a tarantula, highly detailed} 
&
\myimage{squirreltopus.jpg}{a squirrel-octopus hybrid}
&
\myimage{llama.jpg}{a llama wearing a suit}
&
\myimage{wedding-dress.jpg}{Wedding dress made of tentacles}
&
\myimage{raccoon-wizard.jpg}{a wizard raccoon casting a spell}
\end{tabular}
\setcounter{figure}{0}
\vspace{-5pt}

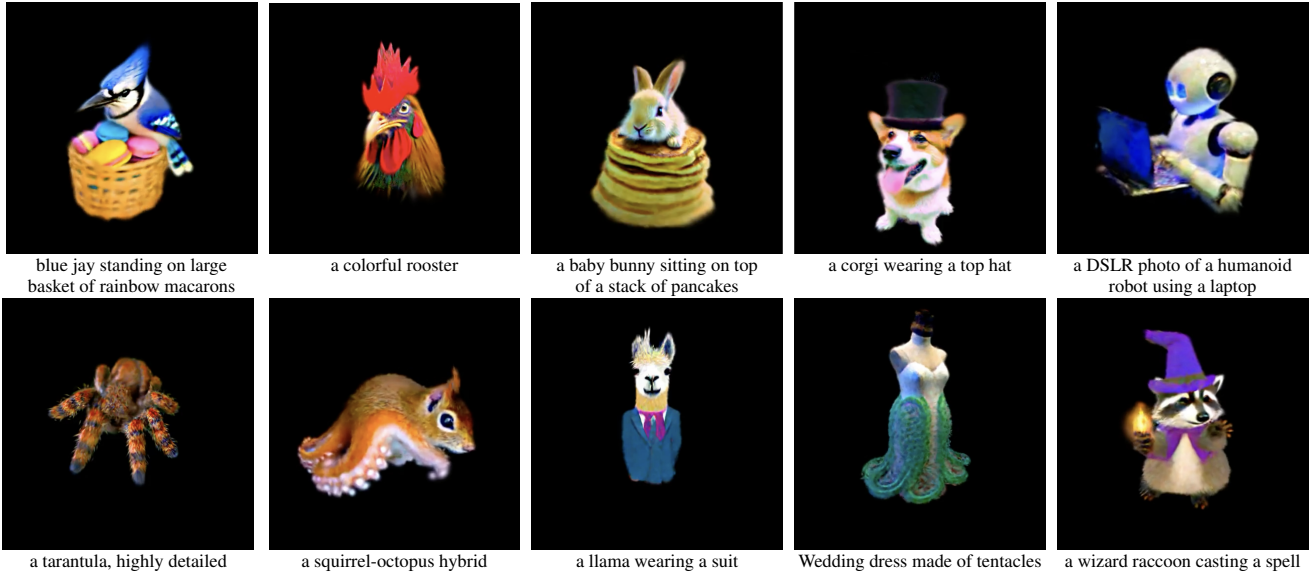
\captionof{figure}{\OURS generates high-quality 3D geometry and appearance, represented as anisotropic 3D Gaussians, from the input text prompts. \OURS reduces the commonly seen multi-face Janus problem, improves local details, and converges robustly without requiring a mesh representation, modifying the SDS loss, or using any additional 3D or multi-view priors.}
\vspace{2em}
\label{fig:teaser}
}
]

\begin{abstract}

In text-to-3D generation, utilizing 2D diffusion models through score distillation sampling (SDS)~\cite{poole2022dreamfusion} frequently leads to issues such as blurred appearances and multi-faced  geometry, primarily due to the intrinsically noisy nature of the SDS loss.
Our analysis identifies the core of these challenges as the interaction among noise levels in the 2D diffusion process, the architecture of the diffusion network, and the 3D model representation. To overcome these limitations, we present \OURS, a methodology incorporating three advances.
First, inspired by InstructNeRF2NeRF~\cite{haque2023instructnerf}, we formalize the equivalence of the SDS generative prior and a simple supervised L2 reconstruction loss. This finding provides a novel tool to debug SDS, which we use to show the impact of time-annealing noise levels on reducing multi-faced geometries. Second, our analysis shows that while image-space diffusion contributes to geometric precision, latent-space diffusion is crucial for vivid color rendition. Based on this observation, \OURS introduces a two-stage training strategy that effectively combines these aspects, resulting in high-fidelity 3D models. Third, we adopt an anisotropic \GS representation, replacing NeRFs, to enhance the overall quality, reduce memory usage during training, and accelerate rendering speeds, and better capture semi-transparent objects. \NAME reduces multi-face geometries, generates fine details, and converges stably.

\end{abstract}    
\section{Introduction}
\label{sec:intro}

Recent advances in Generative AI have marked a paradigm shift across various domains, with notable progress in dialogue generation (\eg, ChatGPT~\cite{chatgpt}), image generation~\cite{ramesh2021dall,rombach2022high,saharia2022photorealistic} and video synthesis~\cite{ho2022imagen,villegas2022phenaki}. However, despite its immense potential, 3D generation still lags behind in these developments. A critical obstacle in 3D generation is the limited size of available datasets, which pale in comparison to the extensive databases used in language~\cite{touvron2023llama} and image fields~\cite{schuhmann2022laion}. To circumvent this lack of 3D datasets, recent efforts such as DreamFusion~\cite{poole2022dreamfusion} leverage 2D text-to-image models by using Score Distillation Sampling to generate 3D models from text prompts, showing exciting results with compelling appearance and geometry. 

However, these text-to-3D approaches are far from  perfect. Several critical issues persist. First, the generated 3D assets frequently exhibit over-saturated colors and blurry appearance. Fine local details are often omitted, giving results a somewhat ``toy-like" quality. Second, the generated 3D asset's geometry tends to be oversimplified, lacking the ability to faithfully represent thin or intricate shapes. Furthermore, these approaches are notorious for exhibiting the ``Janus problem", where the generated 3D object contains multiple canonical views seen from different viewpoints. Lastly, the optimization and rendering speed are hampered by the nature of test-time optimization and the underlying NeRF representation.

In response to the aforementioned challenges, we introduce a simple text-to-3D framework \emph{\NAME}. We start with an empirical analysis that yields two pivotal insights: first, SDS loss can be conceptualized as a supervised reconstruction task using denoised images as ground truth, paving the way for a visualization tool to inspect the training dynamics, and motivating a noise-level annealing to stabilize SDS training. Second, we observe that image-space diffusion excels in geometric accuracy but falls short in color vibrancy. In contrast, latent-space diffusion  enhances color at the expense of geometric fidelity. This leads us to develop a dual-phase training scheme, leveraging distinct diffusion architectures to optimize overall generation quality. Notably, we establish that these observations are agnostic to the underlying 3D representations with broad applicability.
%
A third noteworthy innovation within \NAME is the adoption of \GS~\cite{kerbl3Dgaussians} as the fundamental 3D representation. 
This choice offers a host of distinct advantages, including high fidelity for local details and fast rendering speed. However, directly substituting this representation into existing SDS frameworks leads to low-quality results and artifacts, likely due to the mismatch between noisy SDS loss and the localized nature of \GS. 
To mitigate this, we implement strategies on initialization and density control, achieving a robust convergence to high-quality \GS. 
In summary, our contributions are threefold:
\begin{itemize}
\item Interpreting SDS as a reparametrized supervised reconstruction problem, leading to new visualization that motivates the use of an annealing schedule for noise levels.
\item A two-stage training framework that combines image and latent diffusion for enhanced geometry and color quality.
\item Integration of \GS for text-to-3D generation, with novel regularization techniques for improved quality and convergence, to further improve fidelity and details.
\end{itemize}
With these simple changes, \NAME reduces the multi-face geometry problem and produces a high level of fidelity and details in the synthesized models. \NAME is stable in training, without the need for switching between different 3D representations~\cite{lin2023magic3d}, modification of the SDS loss~\cite{wang2023prolificdreamer}, or additional 3D or multi-view {\em a priori}~\cite{chen2023text,GaussianDreamer}. Our experiments establish \NAME's improvements over leading text-to-3D models.
\section{Related Work}
\label{sec:related}


\paragraph{Text-to-3D.}
Since the advent of large vision-language models~\cite{radford2021learning,rombach2022high,saharia2022photorealistic}, 
the research community has increasingly focused on the generation of 3D assets from textual input.
Early approaches~\cite{Michel_2022_CVPR} utilize the CLIP embedding~\cite{radford2021learning} for alignment between rendered images and text prompts. 
The seminal work DreamFusion~\cite{poole2022dreamfusion} and SJC~\cite{wang2023score} distill the score of learned text-to-image diffusion models~\cite{saharia2022photorealistic,rombach2022high} into optimizing neural 3D models (\eg, NeRF~\cite{mildenhall2021nerf}). These works demonstrate more realistic and high-fidelity results and subsequently became the de facto solutions in this field.

Following the success of DreamFusion/SJC, numerous follow-up works have advanced the field. These approaches encompass a spectrum of ideas including multi-stage refinement~\cite{lin2023magic3d}, geometry and appearance disentanglement~\cite{Chen_2023_ICCV}, and improved the score distillation loss~\cite{wang2023prolificdreamer}. In this work, we study strategies that would enable stable training of a single 3D representation under the SDS framework, without having to convert to meshes (\eg, Magic3D~\cite{lin2023magic3d} and ProlificDreamer~\cite{wang2023prolificdreamer}), designing a different loss (\eg, ProlificDreamer~\cite{wang2023prolificdreamer}, NFSD~\cite{katzir2023noise}), or relying on other 3D or multi-view {\em a priori} that is trained on additional datasets (\eg, GSGEN~\cite{chen2023text}). 

\paragraph{Neural 3D Representations.}

Neural 3D representations originated in the context of 3D reconstruction~\cite{chen2018implicit_decoder, Park_2019_CVPR, Occupancy_Networks}, where neural networks implicitly learned signed distance functions and occupancy functions. This implicit modeling was then extended to the task of novel-view synthesis~\cite{MildenhallECCV2020,Lombardi:2019,sitzmann2019srns}, yielding remarkable rendering outcomes. Subsequent works~\cite{Chen2022ECCV,wang2021neus,yu2020pixelnerf,reiser2021kilonerf} continued refining neural 3D representations from diverse perspectives; for a comprehensive overview, readers are directed to \citet{tewari2022advances}. 
A noteworthy trend~\cite{yu2021plenoctrees,mueller2022instant,sun2021direct} involves the adoption of hybrid implicit-explicit representations, inducing more spatially localized gradient changes for faster training and improved quality.
Most recently, \citet{kerbl3Dgaussians} popularized 3D Gaussians as an innovative, explicit scene representation. In this work, we incorporate a \GS representation and regularized score distillation sampling (SDS) during training. This integration promotes fast convergence and enhances the overall quality of the generated scenes. We diverge in a few details, such as using diffuse color without the spherical harmonics, and we adopt a customized initialization and density control strategy.
Parallel efforts such as GSGEN~\cite{chen2023text}, DreamGaussian~\cite{tang2023dreamgaussian} and GaussianDreamer~\cite{GaussianDreamer} have concurrently chosen \GS as the representation. However, GSGEN~\cite{chen2023text} and GaussianDreamer~\cite{GaussianDreamer} both require an additional 3D prior during training. DreamGaussian~\cite{tang2023dreamgaussian} uses \GS only as coarse initialization for extracting a mesh, whereas we produce high quality \GS directly.

\paragraph{Image Generative Models.}
Generative models for images have been an active area of research, leading to significant advances in the generation of realistic and high-quality 2D content. Early approaches like Variational Autoencoders (VAEs)~\cite{kingma2013auto}, Generative Adversarial Networks (GANs)~\cite{NIPS2014_5ca3e9b1}, and Normalizing Flows~\cite{kobyzev2020normalizing} laid the foundation for this field. In recent years, diffusion models~\cite{sohl2015deep,song2019generative,ho2020denoising} have demonstrated exceptional capabilities in generating large-scale, high-fidelity images with precise textual control over content and style. 
In this work, we aim to ensure a robust and stable training process with the SDS loss. To accomplish this, we incorporate both an image-space diffusion model, DeepFloyd IF~\cite{DeepFloyd}, and a latent-space diffusion model, Stable Diffusion~\cite{rombach2022high}. This strategic combination is employed due to the distinct yet complementary guidance these models offer in the context of text-to-3D generation.
\section{Preliminaries and Notation}
In this section we briefly introduce the background on both Score Distillation Sampling (SDS) and \GS.

\paragraph{Score Distillation Sampling (SDS).} SDS is a loss introduced in DreamFusion~\cite{poole2022dreamfusion} for generating a 3D scene model (such as a NeRF~\cite{mildenhall2021nerf}) from a text prompt $y$  using a pretrained 2D diffusion model. Starting with a randomly initialized scene model, parameterized by $\theta$, we iteratively sample random viewpoints $\pi$ facing the object, and render an RGB image $\vx$ using differentiable rendering, i.e.
$\vx=g(\theta, \pi).$
This rendered RGB image $\vx$ is treated as an image to be denoised with a pretrained 2D diffusion model to obtain an improved image that better aligns with the text prompt. The image $\vx$ is perturbed with additive Gaussian noise $\epsilon \sim \mathcal{N}(0, 1)$ such that 
\begin{equation}
    \label{eqn:diffusion}
    \vx_t = \sqrt{\bar{\alpha}_t}\vx + \sqrt{1-\bar{\alpha}_t}\epsilon, 
\end{equation}
where the noise hyperparameter $t$ determines the magnitude of $\bar{\alpha_t}$, predefined through a fixed variance schedule. The diffusion network typically predicts the added noise $\hat{\epsilon}$. While the diffusion process is iterative, \cref{eqn:diffusion} suggests a one-step prediction of the denoised image as
\begin{equation}
    \label{eqn:denoised_image}
    \hat{\vx}(\vx_t;t,y) = \frac{\vx_t-\sqrt{1-\bar{\alpha}_t}\hat{\epsilon}(\vx_t;t,y)}{\sqrt{\bar{\alpha}_t}}.
\end{equation}
Note that these equations are based on Equations 4 and 15 in  the DDPM paper~\cite{ho2020denoising}.

The DreamFusion authors find that omitting the poorly conditioned diffusion network Jacobian term from the typical diffusion training loss gradient gives a more stable gradient for backprogation to the current scene model, resulting in the SDS loss gradient 
\begin{equation}
    \label{eq:sds_loss}
    \nabla_\theta \ell_\text{SDS} \left(\vx=g(\theta, \pi)\right) \triangleq \E_{t,\epsilon} \left[w_t \left(\hat{\epsilon}(\vx_t; y, t)-\epsilon \right)\frac{\partial \vx}{\partial \theta} \right]\,.
\end{equation}
In DreamFusion, this is shown to be the gradient of a weighted probability density distillation loss. In \cref{sec:approach:sds-analysis}, we explore a more intuitive interpretation of the SDS loss that leads to a natural tool for visualization. 

    

\paragraph{3D Gaussian Splatting} \GS is an explicit 3D representation popularized by \cite{kerbl3Dgaussians}, where the scene is comprised of a large set of semitransparent anisotropic 3D Gaussians. These Gaussian primitives are geometrically parameterized by covariance (or equivalently scale and rotation) and position, with appearance parameterized by color and opacity. This representation has been shown to achieve remarkable results in the area of novel-view synthesis, with significantly higher quality and rendering speed compared to previous volumetric methods based on radiance fields.
To render 3D Gaussians, each primitive is projected into a screen space 2D Gaussian and sequentially rasterized in a back-to-front manner using alpha-blending. For screen-space positions $\mu_i$, screen-space covariances $\Sigma_i$, colors $c_i$, and opacities $\sigma_i$, the per-primitive alpha values and the final composited rendered color at pixel position $x$ are
\begin{align*}
\alpha_i(x) &= \sigma_i e^{-\frac{1}{2}(x-\mu_i)^T\Sigma_i^{-1}(x-\mu_i)}\\  C(x) &= \sum_i{c_i\alpha_i(x)\prod_{j<i}{(1-\alpha_j(x))}}
\end{align*}
This rendering process is fully differentiable (given a differentiable sorting subroutine), enabling its use as a representation for text-to-3D generation.
\begin{figure*}[hbtp]
  \includegraphics[width=\textwidth]{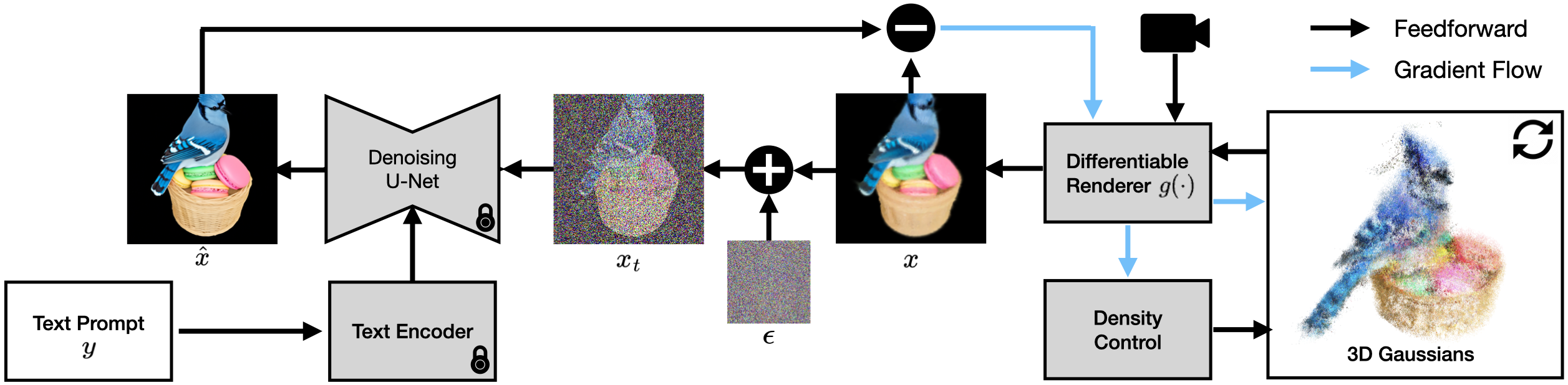}
  \caption{Our pipeline, \OURS, is an iterative optimization framework for creating anisotropic \GS from text prompts. It begins with a text prompt as input, which is then processed by a fixed, pretrained text encoder to generate a text embedding. This embedding serves as conditioning input for our pretrained diffusing U-nets. During each iteration, we randomly sample a viewpoint and render the 3D Gaussians into an RGB image , which is subsequently input into the U-net for denoising and enhancement. The discrepancies between the denoised images and the originally rendered images are utilized as gradients for updating the anisotropic \GS.}
  \label{fig:pipeline}
\end{figure*}

\section{\OURS}
\label{sec:approach}
%

In a nutshell, \OURS addresses both the common blurry appearance and the multi-face geometry problems in SDS training with three conceptually simple modifications: (1) time-annealing of noise levels for 2D diffusion, which reduces multi-face geometries; (2) a dual-phase training that utilizes image-space diffusion for accurate geometry and subsequently a latent-space diffusion for vibrant and sharp appearances; and (3) integration of \GS with regularization and density control that aims to improve model capacity for local details and transparent objects. 

\subsection{Inspecting and Taming SDS Loss}
\label{sec:approach:sds-analysis}

A key challenge of optimization with the SDS loss is the noisy gradients inherent in the formulation. To address this, we first propose a novel interpretation that links it to NeRF reconstruction (specifically, Instruct-NeRF2NeRF~\cite{haque2023instructnerf}). This theoretical connection leads to two practical benefits: an annealing strategy for noise levels to improve convergence and a new visualization tool for inspecting the training dynamics of SDS. 

\paragraph{The SDS Generative Prior and NeRF Reconstruction.} In the DreamFusion training paradigm, the 3D scene representation is treated as an image generator while the SDS loss is treated as a prior over the generated images. While this probability-based interpretation allows the use of statistical tools (e.g.\ \cite{wang2023prolificdreamer}), a more practical lens is suggested in a different related work.
Instruct-NeRF2NeRF~\cite{haque2023instructnerf} is a recent work that also uses generative 2D models, albeit for a style transfer application rather than text-to-3D generation. In this work, the usual supervised reconstruction framework is used where a set of ground truth images is compared against a rendering from the current scene model. During training, Instruct-NeRF2NeRF uses the generative model to iteratively replace individual ground truth images with results from the 2D image generator (which may not be multiview-consistent) based on the current rendering result from that viewpoint. The authors note that their training process can be interpreted as a variant of SDS. 
Here we make this connection explicit: 
\begin{prop}\label{prop:main}
Training a 3D scene representation with the SDS generative prior is mathematically equivalent (up to scale) to using L2 reconstruction loss against images generated from the 2D generator.
\end{prop}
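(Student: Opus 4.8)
The plan is to show that the SDS gradient in \cref{eq:sds_loss} is, sample by sample, the $\theta$-gradient of a squared-error loss between the current rendering $\vx = g(\theta,\pi)$ and the denoised image $\hat{\vx}$, provided $\hat{\vx}$ is treated as a fixed target (detached from $\theta$, i.e. with a stop-gradient). Concretely, I would define the per-sample reconstruction loss $\ell_\text{recon} = \tfrac{1}{2}\|\vx - \hat{\vx}(\vx_t;t,y)\|^2$ with $\hat{\vx}$ held constant in $\theta$, so that $\nabla_\theta \ell_\text{recon} = (\vx - \hat{\vx})\,\partial\vx/\partial\theta$. Since $\vx$ does not itself depend on the sampled $(t,\epsilon)$ and $\hat{\vx}$ is detached, the operators $\nabla_\theta$ and $\E_{t,\epsilon}$ commute, so it suffices to compare integrands.

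The key algebraic step is to rewrite $\vx - \hat{\vx}$ using \emph{both} \cref{eqn:diffusion} and \cref{eqn:denoised_image}. Solving \cref{eqn:diffusion} for the clean image gives $\vx = (\vx_t - \sqrt{1-\bar{\alpha}_t}\,\epsilon)/\sqrt{\bar{\alpha}_t}$; subtracting \cref{eqn:denoised_image} then yields $\vx - \hat{\vx} = \sqrt{(1-\bar{\alpha}_t)/\bar{\alpha}_t}\,\bigl(\hat{\epsilon}(\vx_t;y,t) - \epsilon\bigr)$. Substituting back, $\nabla_\theta \ell_\text{recon} = \sqrt{(1-\bar{\alpha}_t)/\bar{\alpha}_t}\,(\hat{\epsilon}-\epsilon)\,\partial\vx/\partial\theta$, which is exactly the integrand of \cref{eq:sds_loss} with the particular choice of weighting $w_t = \sqrt{(1-\bar{\alpha}_t)/\bar{\alpha}_t}$. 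Taking $\E_{t,\epsilon}$ of both sides gives $\nabla_\theta\,\E_{t,\epsilon}[\ell_\text{recon}] = \nabla_\theta\,\ell_\text{SDS}$ for that $w_t$; any discrepancy between this $w_t$ and the weighting used in a given implementation is absorbed into the ``up to scale'' qualifier (and, if one models it as a per-$t$ constant reweighting, the gradients still agree in direction at each noise level). I would also remark that the argument is agnostic to whether $\hat{\vx}$ comes from a single denoising step (as in \cref{eqn:denoised_image}) or from a full multi-step sampler as in Instruct-NeRF2NeRF~\cite{haque2023instructnerf}: in either case $\hat{\vx}$ induces an effective residual $\hat{\epsilon}_\text{eff} = (\vx_t - \sqrt{\bar{\alpha}_t}\,\hat{\vx})/\sqrt{1-\bar{\alpha}_t}$ and the identity holds verbatim.

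The point I expect to require the most care is the stop-gradient hypothesis: the equivalence holds precisely because SDS deliberately drops the diffusion-network Jacobian $\partial\hat{\epsilon}/\partial\vx$, which is exactly mirrored by treating the target $\hat{\vx}$ as a constant in $\ell_\text{recon}$; without it the two $\theta$-gradients would differ by exactly that omitted Jacobian term. I would therefore state this detachment explicitly as the standing assumption of the proposition. A secondary, purely bookkeeping issue is fixing normalization conventions up front — the factor $\tfrac12$, whether the squared error is summed or averaged over pixels, and the per-iteration sampling of the viewpoint $\pi$ together with $(t,\epsilon)$ — so that the final comparison between \cref{eq:sds_loss} and $\nabla_\theta\,\E_{t,\epsilon}[\ell_\text{recon}]$ is literally term by term.
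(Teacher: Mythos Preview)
Your proposal is correct and follows essentially the same route as the paper's proof: both rely on the single algebraic identity $\vx - \hat{\vx} = \sqrt{(1-\bar{\alpha}_t)/\bar{\alpha}_t}\,(\hat{\epsilon}-\epsilon)$ obtained by combining \cref{eqn:diffusion} and \cref{eqn:denoised_image}, and then match the resulting expression against the SDS integrand. The paper runs the derivation from the SDS side (expanding $\hat{\epsilon}-\epsilon$) and absorbs the scale into a coefficient $\beta(t)=w(t)\sqrt{\bar{\alpha}_t}/\sqrt{1-\bar{\alpha}_t}$, whereas you run it from the L2 side and fix the particular $w_t$ that makes them agree; these are the same computation up to direction. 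Your explicit treatment of the stop-gradient on $\hat{\vx}$ and the $\nabla_\theta\leftrightarrow\E_{t,\epsilon}$ commutation are points the paper leaves implicit, so your write-up is if anything slightly more careful.
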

\begin{proof}
Without loss of generality, consider the SDS loss with an image-space diffusion model without classifier-free guidance. 
We use \cref{eqn:denoised_image,eqn:diffusion} to expand the noise residual:
\begin{equation*} 
\begin{split}
\hat{\epsilon}(\vx_t;t,y) - \epsilon &= \frac{\vx_t - \sqrt{\bar{\alpha_t}} \hat{\vx}(\vx_t;y,t)}{\sqrt{1-\bar{\alpha_t}}}-\epsilon \\
&=\frac{\sqrt{\bar{\alpha_t}}\vx +\sqrt{1-\bar{\alpha_t}}\epsilon - \sqrt{\bar{\alpha_t}} \hat{\vx}(\vx_t;y,t)}{\sqrt{1-\bar{\alpha_t}}}-\epsilon \\
&= \frac{\sqrt{\bar{\alpha_t}}}{\sqrt{1 - \bar{\alpha_t}}}\left(\vx - \hat{\vx}(\vx_t;y,t)\right) \\
\end{split}
\end{equation*}
Then, the gradient of the SDS loss is implemented as
\begin{equation*} 
\label{eq1}
\begin{split}
    \nabla_\theta \ell_\text{SDS} \left(\vx=g(\theta, \pi)\right)
    & \triangleq w(t) \left(\hat{\epsilon}(\vx_t; y, t)-\epsilon \right)\frac{\partial \vx}{\partial \theta} \\
     &= w(t) \frac{\sqrt{\bar{\alpha_t}}}{\sqrt{1-\bar{\alpha_t}}}\left(\vx - \hat{\vx}(\vx_t;y,t)  \right)\frac{\partial \vx}{\partial \theta},
\end{split}
\end{equation*}
which is exactly the gradient of a scaled L2 loss $\ell_{L2}(\vx, \hat{\vx}) = \frac{\beta(t) }{2}\| \vx - \hat{\vx}\|^2$ between the current rendering $\vx$ and ground truth image $\hat{\vx}(\vx_t; y,t)$, with $\beta(t) = \frac{w(t)\sqrt{\bar{\alpha_t}}}{\sqrt{1-\bar{\alpha}_t}}$. For latent-space diffusion models, a similar line of reasoning shows that SDS loss is instead equivalent to a latent-space L2 loss.
\end{proof}

\paragraph{Annealing of Noise Level.}
The above discussion establishes a novel perspective where the one-step denoised image $\hat{\vx}$, as defined in \cref{eqn:denoised_image}, is conceptualized as the ground truth image in the context of NeRF reconstruction. This insight yields significant implications for the noise level scheduling in the 2D diffusion process. Particularly, to ensure effective convergence during SDS training, it is crucial that the variance of these ground truth images starts large and decreases as training advances. To achieve this, we dynamically adjust the noise distribution's upper and lower limits, progressively narrowing the range with training iterations. We use a piecewise linear schedule for the upper and lower bounds that converge by the end of the training. Guiding this noise magnitude is critical, since excessive noise leads to larger gradient magnitudes (equivalent to having a changing ground truth), which can lead to worse model convergence as shown later in \cref{sec:exp-ablation-sds}. Incidentally, ProlificDreamer~\cite{wang2023prolificdreamer} proposes a similar but simpler annealing strategy, reducing noise level after initial iteration steps. 

\paragraph{Visualization of Supervision Signals.} 
A second advantage of implementing the proposed SDS loss reparameterization lies in the enhanced interpretability of the training process. Through the visualization of the pseudo-ground-truth image $\hat{\vx}$ throughout the training phase, we gain insights into the direct influence of different hyperparameters on target images. This capability empowers us to devise a more robust training scheme, effectively taming the inherent noise in SDS loss for text-to-3D tasks.

A common challenge for 3D generation from text is the tendency for these systems to form objects with multiple faces. By examining the latent images we find a relationship between the multi-face problem and the SDS noise parameter. \Cref{fig:sds_noise_annealing_latent_viz} shows the predicted original images $\hat{\vx}$ from two training runs with different noise levels. For the run with larger noise the system is more likely to hallucinate a face on the back of the dog's head. Since each iteration is conditioned on the previous state, repeated selection of large noise values can cause the model to converge to a geometry with many faces. On the flip side, using lower noise levels reduces the signal to the optimization as the latent images do not change between iterations. Taken together, these results suggest we should use an annealing strategy for the added noise where it begins with a larger range and narrows as the training progresses. 

\begin{figure*}[hbtp]
  \centering
  \includegraphics[width=0.70\linewidth]{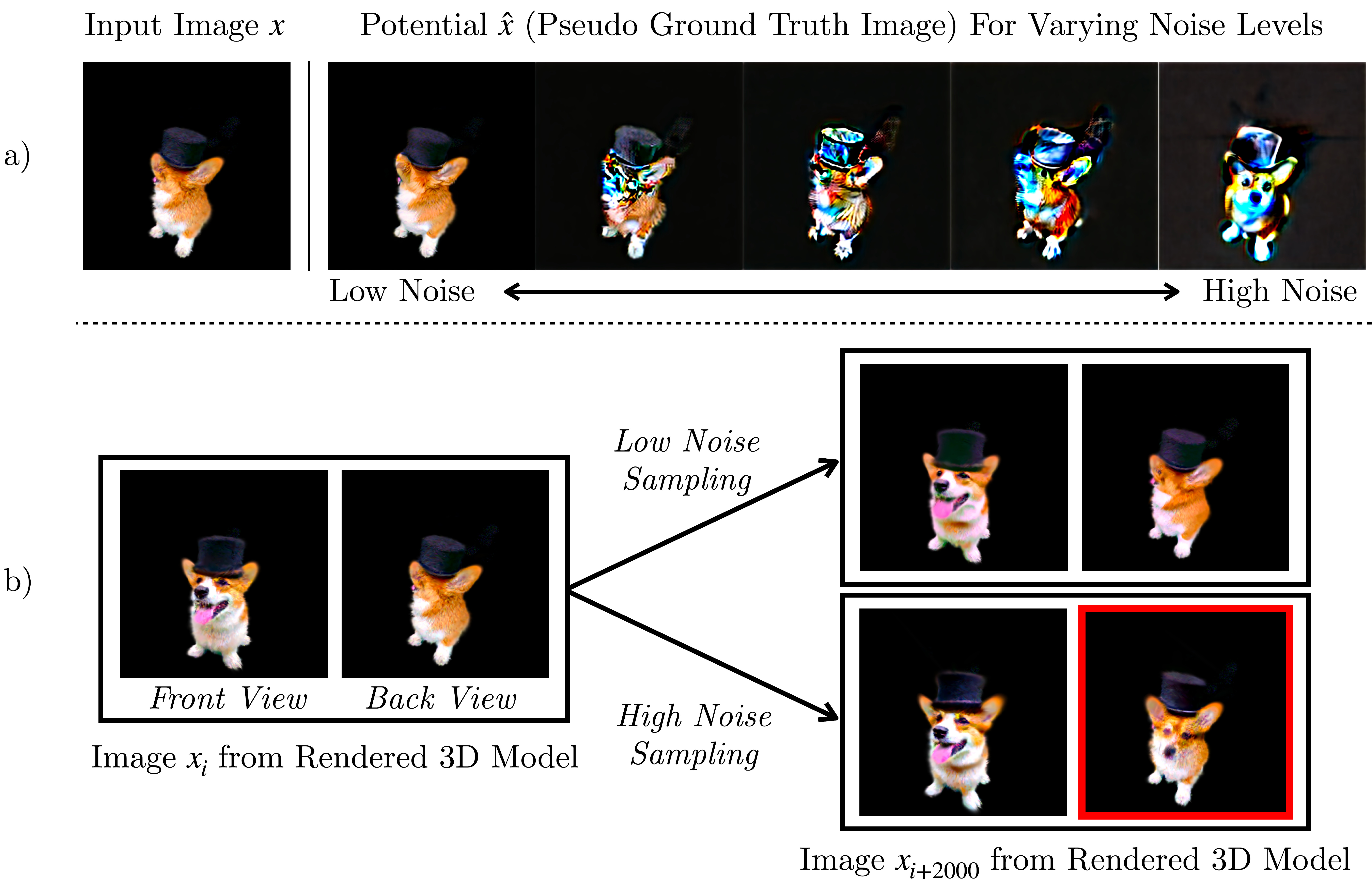}
  \vspace{-3pt}
  \caption{a): Per Proposition \ref{prop:main}, the reformulated loss equation enables visualization of the one-step denoised image $\hat{\vx}$ which allows us to observe the effect of modifying the level of noise being injected into $\vx_t$ in \cref{eqn:diffusion} and subsequently $\hat{\vx}$ in \cref{eqn:denoised_image}. Less noise produces images closer to the input image $\vx$ while larger noise levels produce more variation.\\
b): Two training runs are compared, one biased to sample lower noise (top) and one biased to sample higher noise (bottom). Two views are rendered at both an early iteration $i$ and later iteration $i+2000$. From a), high noise samples are associated with a face incorrectly hallucinated on the back of the dogs head. Unsurprisingly, the model with larger noise ends up converging to a multi-faced dog.}
  \label{fig:sds_noise_annealing_latent_viz}
  \vspace{-5pt}
\end{figure*}

Similarly, the visualizations of the one-step denoised image $\hat{\vx}$ for various guidance scales in \cref{fig:guidance_scale_latent_viz} provide insight into the effect of the guidance scale hyperparameter. Lower values lead to smooth images lacking fine details, while larger values hallucinate high-frequency details and over-saturated colors. This can lead to fake-looking images as shown in \cref{sec:exp-ablation-sds}. While the effect this parameter is already understood, this simple example highlights the insights made possible by this reparameterization. 

\begin{figure}[hbtp]
  \centering
  \includegraphics[width=\linewidth]{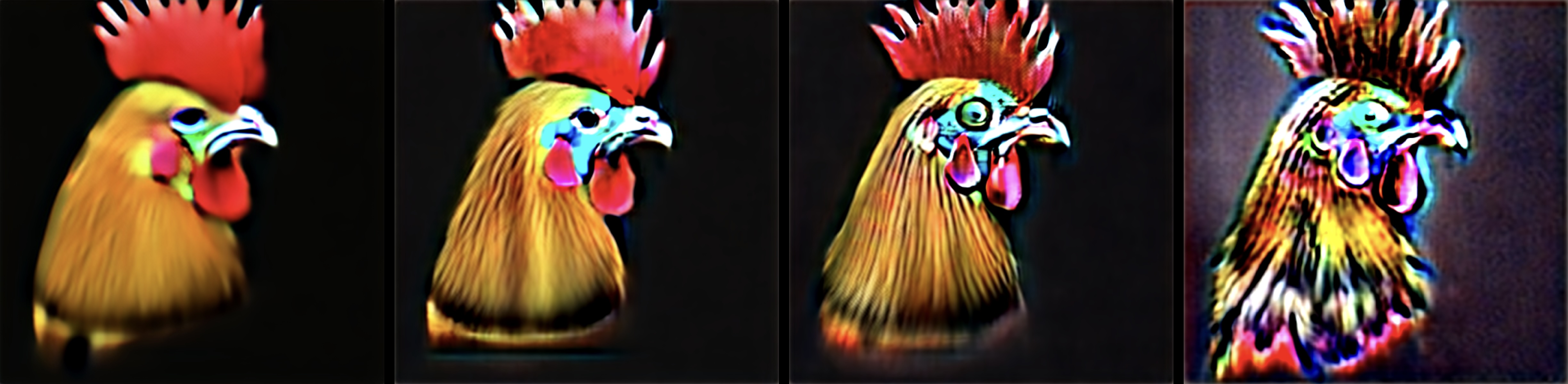}
  \vspace{-3pt}
  \caption{Understanding the impact of guidance scale on the appearance via visualizing the one-step denoised images $\hat{\vx}$ during training. Left-to-right: Guidance scale 10, 20, 35, and 100. As the guidance scale increases, so does the high frequency detail and color, eventually leading to an unrealistic image. }
  \label{fig:guidance_scale_latent_viz}
  \vspace{-6pt}
\end{figure}



\begin{figure}[htbp]
  \includegraphics[width=\linewidth]{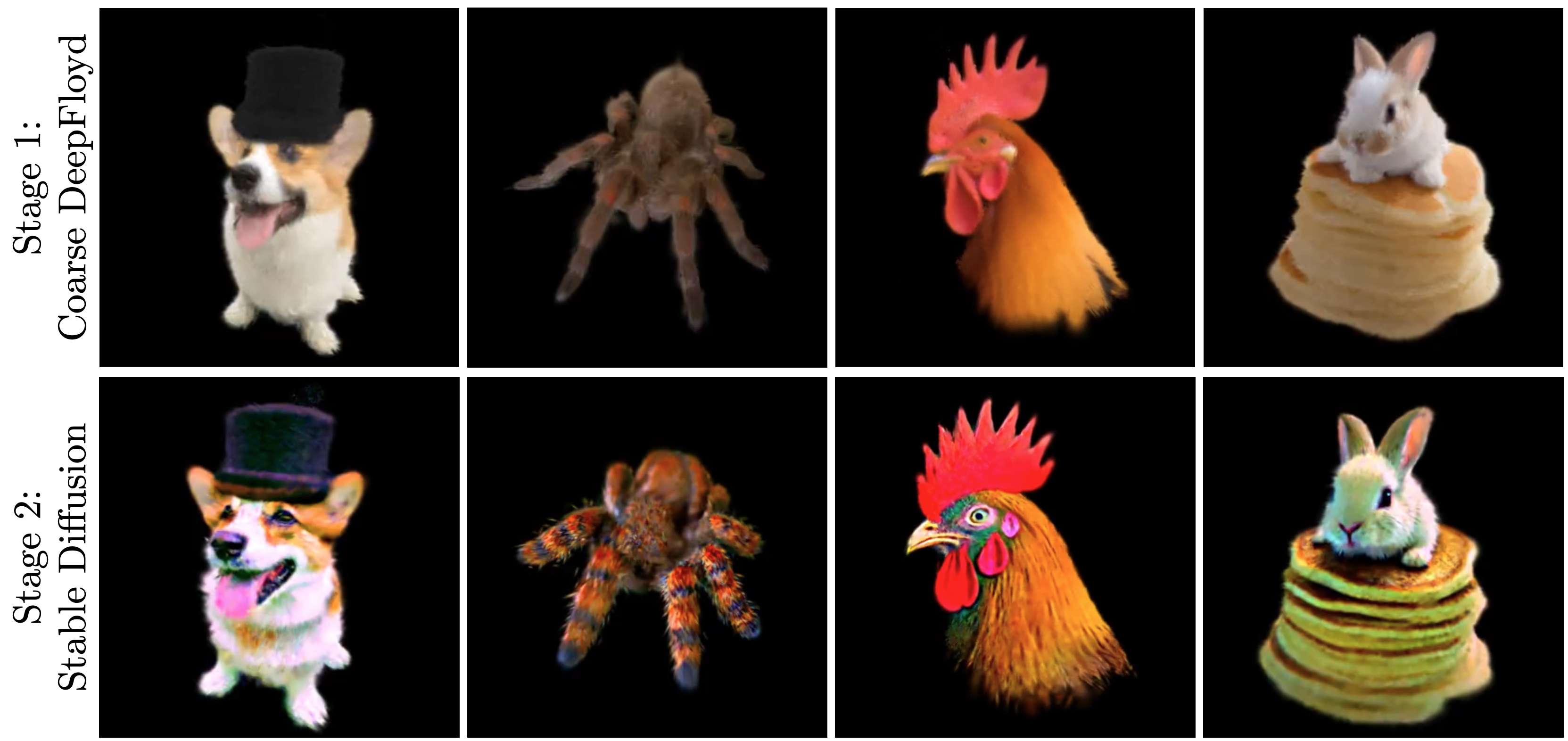}
  \caption{Results from two training stages. Stage 1 (top): image-space diffusion (DeepFloyd) produces accurate geometry at the cost of muted colors. Stage 2 (bottom): we finetune with latent-space diffusion (Stable Diffusion) to enhance the appearance.}
  \label{fig:ablation_gs_if_sd}
  \vspace{-5pt}
\end{figure}

\subsection{A Tale of Two Diffusions: Image vs. Latent}
The current landscape of diffusion models in the literature bifurcates into two categories: image-space diffusion and latent-space diffusion. 
Image-space models, such as DeepFloyd~\cite{DeepFloyd} and Imagen~\cite{saharia2022photorealistic}, directly apply noise to the images. 
In contrast, latent-space models like Stable Diffusion~\cite{rombach2022high,podell2023} necessitate an encoder-decoder pair that maps between the image and latent spaces, applying noise only in the latent domain. 
Our empirical analysis reveals that these two model types exhibit different guidance directions for text-to-3D. 
We propose an effective two-stage training framework that leverages their distinct properties. 
As shown in~\cref{fig:ablation_gs_if_sd}, the proposed framework can produce sharp texture and detailed geometry. Incidentally, Magic3D~\cite{lin2023magic3d} arrives at a similar training strategy but mainly for reasons of speed and resolution, rather than quality.

\paragraph{Image-space diffusion for geometry reconstruction.} 
For the first stage of training, we propose to use the image-space model, DeepFloyd~\cite{DeepFloyd}, to train the 3D model.
The primary goal at this stage is to converge to a reasonable rough geometry, so that a detailed appearance can be learned later in the optimization, as shown in the first row of~\cref{fig:ablation_gs_if_sd}.
Therefore, in this stage, we only use the coarse DeepFloyd model, operating at $64 \times 64$ resolution.
At this stage, all the parameters of the 3D models are learnable. 
A low learning rate is used for the geometry as it converges (see \ref{app:ablation-density-control} for more detailed analysis). 

\paragraph{Latent-space diffusion for appearance enhancement.}
While the coarse reconstruction successfully yields a 3D model with satisfactory geometric accuracy, it tends to fall short in terms of visual quality due to its use of low-resolution 2D image supervision at 64 x 64 resolution. 
The primary objective of the refinement stage is to significantly enhance the visual fidelity of the 3D model, as shown in the second row of~\cref{fig:ablation_gs_if_sd}.
To achieve this, we employ a latent-space diffusion model, Stable Diffusion (SDv2.1-base)~\cite{rombach2022high} trained with $512 \times 512$ resolution images.
As shown in~\ref{app:ablation-two-stage-training}, the image-space diffusion models are not suitable to get the detailed appearance for the 3D model (even for a high-resolution model like DeepFloyd with super-resolution modules).
We hypothesize that this is due to view-inconsistent pixel-level guidance, resulting in a blurred model and the loss of appearance detail. 
In contrast, the guidance from the latent-space diffusion model is less sensitive to this issue, since the loss is calculated in the latent space after feature compression from the image encoder. 
As a result, with the guidance from Stable Diffusion at the second stage, we largely increase model fidelity for both appearance and geometry.

\subsection{Integrating 3D Gaussians}
The aforemention training scheme provides stablized training with NeRF, yet there is potential for further enhancement in the finer details. \GS offer advantages such as rapid rendering speeds and enhanced local representation over other NeRF representations. However, they are sensitive to the hyper-parameters and training strategies. In fact, directly substituting this representation into our existing training frameworks leads to low-quality results and artifacts, likely due to the mismatch between noisy SDS loss and the localized nature of \GS. Specifically, we observe that despite having on average 10x larger gradient magnitude compared to other learnable parameters (\eg, colors, scales, rotation), the position variables exhibit a "random walk" behavior without converging to a high-quality geometry. This observation motivates specialized \GS training strategies around initialization and density control.


\paragraph{Initialization.} In 3DGS~\cite{kerbl3Dgaussians}, Structure-from-Motion (SfM) is used to initialize the Gaussian locations for scene reconstruction. However, this method cannot be used in text-to-3D generation. Thus, we use a simple alternate approach that has proved compatible with a wide range of text prompts. To start, the centers of the Gaussian primitives are randomly sampled with a uniform distribution over a volume. While the positions are uniformly distributed, the opacity of each point is initialized relative to its proximity to the center of the volume. More specifically, the initial opacity linearly decays with distance from the origin. This simple heuristic helps with convergence since the majority of generated objects have most of their density closer to the center of the scene. 
\paragraph{Density control.} 
Our experiments show that the position learning of \GS is hard to tune and easily diverges with large learning rates due to the noisy signal from SDS loss. To stabilize training, a small learning rate is required for the position variables to avoid moving too far from their initial locations. Consequently, we cannot solely rely on position learning to produce fine geometry.
Therefore, we turn to density control for geometry construction. Specifically, after initialization, we apply periodic densification and pruning, gradually adding new points in order to produce finer geometry and appearance.  Additionally, we find that resetting the opacities to near zero at early training stages helps reduce floaters and bad geometry. Please refer to \ref{app:density-control-details} for details of our implementation.
\section{Experiments}
\label{sec:experiment}
We compare \OURS against several state-of-the-art text-to-3d methods on the overall quality of the synthesized 3D geometry and appearance as well as memory usage during training and rendering speed. More ablation studies can be found in our appendices.

\subsection{Comparison To Prior Methods}
As shown in ~\cref{fig:sota}, \OURS achieves state-of-the-art results compared to baseline works including DreamFusion~\cite{poole2022dreamfusion}, Magic3D~\cite{lin2023magic3d}, GSGen~\cite{chen2023text}, and ProlificDreamer~\cite{wang2023prolificdreamer}. \OURS's initial coarse geometric optimization converges to accurate geometry, greatly reducing the occurrence of multi-faced geometry commonly seen in the baseline methods. \cref{tab:speed} presents an efficiency analysis of our method in comparison to baseline approaches. Our method, employing \GS, renders at $>30$FPS while maintaining reasonable training time and minimal GPU memory usage. 
Notably, Magic3D tends to produce over-saturated color while ProlificDreamer and GSGen achieve similar detailed textures but consistently produce multi-faced or otherwise incorrect geometries (additional visualization in \ref{app:additional_vis}).  


\begin{figure*}[htp]
  \centering
  \includegraphics[width=0.93\textwidth]{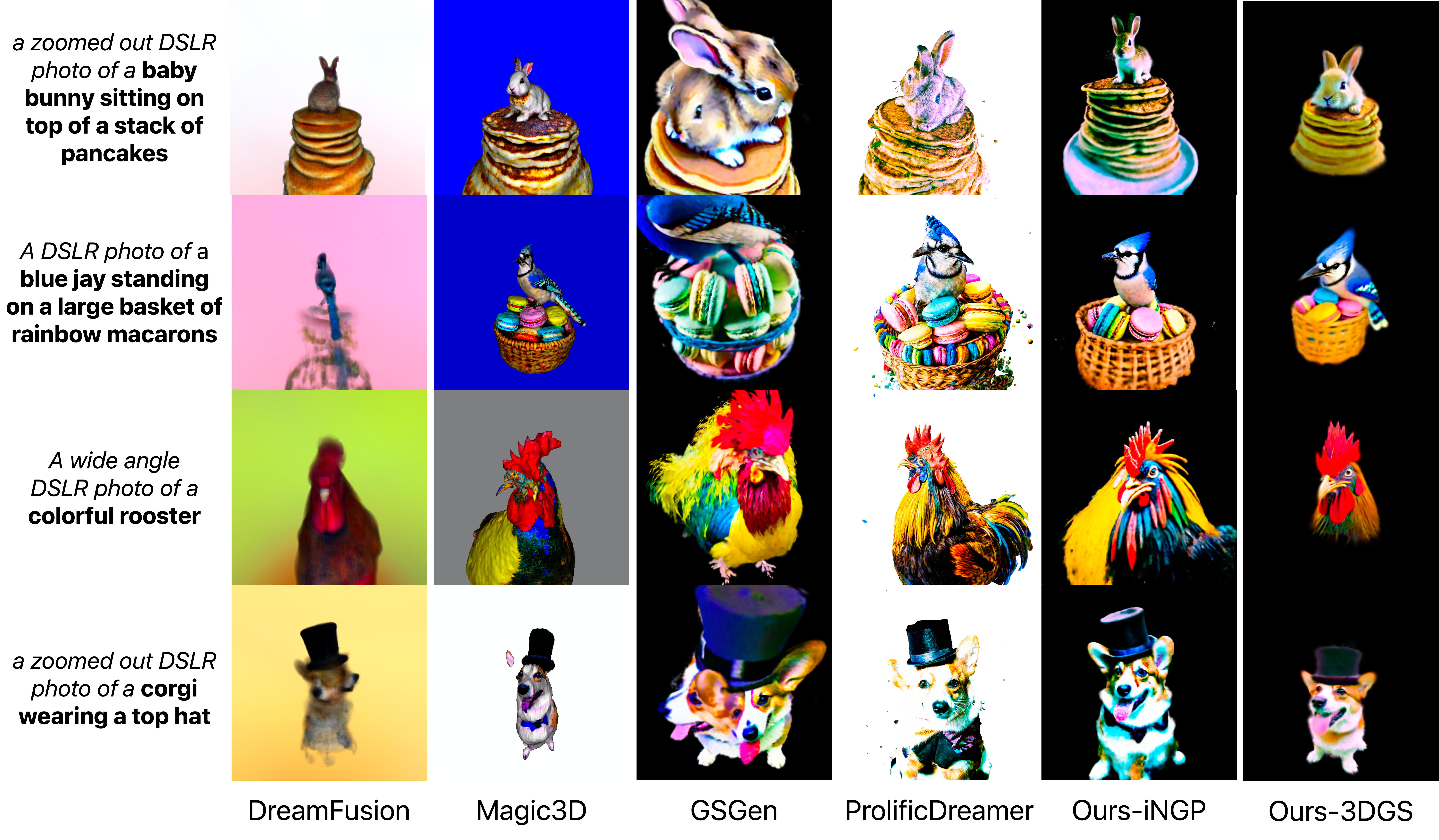}
  \vskip -0.1in
  \caption{Comparison against prior methods. Prior methods typically have problems such as blurriness (DreamFusion~\cite{poole2022dreamfusion}), multi-face geometry (Magic3D~\cite{lin2023magic3d}, GSGen~\cite{chen2023text}, and ProlificDreamer), over-saturation in color (Magic3D~\cite{lin2023magic3d}), cartoony appearances, or mismatch between content and text prompts. \OURS (including both iNGP~\cite{mueller2022instant} and \GS~\cite{kerbl3Dgaussians} geometry primitives) achieves accurate geometry representation with fine details while preserving a realistic appearance. Results for DreamFusion and Magic3D use the open-source Threestudio implementation~\cite{threestudio2023} since the authors have not released their code. Additional visualization are shown in \ref{app:additional_vis}.
  }
  \vskip -0.1in
  \label{fig:sota}
\end{figure*}



\subsection{Generalization Across 3D Representations} \label{sec:ablation-gs}
We showcase the efficacy of \GS compared to volumetric radiance fields, specifically iNGP~\cite{mueller2022instant}. iNGP \cite{mueller2022instant} was widely adopted in previous work~\cite{Chen_2023_ICCV,lin2023magic3d,wang2023prolificdreamer} thanks to its speed compared to classical MLP-based implicit neural representations~\cite{poole2022dreamfusion}. To ensure an equitable evaluation, both \GS and iNGP were trained with the proposed training scheme. The qualitative results are reported in the two rightmost columns in \cref{fig:sota}. Our training scheme is generalizable beyond \GS and works well on iNGP. Overall, \GS still produce better local details than iNGP, supporting our choice of 3D representation.
For detailed structures (\eg hairs from corgi and bunny), iNGP typically produces either blurry or noisy surface textures, while \GS generate realistic detailed structures. iNGP also results in temporal aliasing and flickering, which is visible only in videos.

Quantitative efficiency measurements, presented in \cref{tab:speed}, indicate the advantages of \GS. With a similar parameter count, \GS utilize $82\%$ less GPU memory and render 6 times faster faster than iNGP~\cite{mueller2022instant}. Interestingly, training time between the two methods remained comparable, largely owing to the fact that the 2D diffusion models constitute the dominant time-consuming component in the forward process, especially in the coarse stage when rendering resolution is low.


\begin{table}[htbp]
    
    \begin{center}
    \begin{adjustbox}{width=1\linewidth}
    \begin{small}
    {
        \begin{tabular}{cccccc}
        \toprule
        
        & \textbf{Training Time} & \textbf{Peak Memory Usage} & \textbf{Render Speed}\\
         & \textbf{(min)} & \textbf{(GB)} & \textbf{(fps)}\\
        \midrule
        DreamFusion-iNGP (12.6M) \cite{poole2022dreamfusion} & \textbf{40} & 17.6 & 14.0 \\ 
        Magic3D (12.6M) \cite{lin2023magic3d} & 75 & 16.6 & 9.4 \\ 
        ProlificDreamer (12.6M) \cite{wang2023prolificdreamer} & 277 & 31.8 & 10.8 \\ 
        GSGen (4.7M) \cite{chen2023text} & 228 & 9.9 & \textbf{52.5}\\ 
        \midrule
        Ours-iNGP (12.6M) & 81 & 31.9 & 7.38 \\ 
        Ours-3DGS (14M) & 97 & \textbf{5.7} & 46.0\\ 
        \bottomrule
        \end{tabular}
    }
    \end{small}
    \end{adjustbox}
\end{center}
\vskip -0.2in
\caption{Comparison of parameter count, training time, memory usage, and render speed. The evaluations are performed on a single NVIDIA V100 GPU. DreamFusion and Magic3D are not open-sourced so we use the Threestudio implementation~\cite{threestudio2023}.}
\vskip -0.1in
\label{tab:speed}
\end{table}


\subsection{Ablation on SDS Annealing} 
\label{sec:exp-ablation-sds}

A critical aspect of the optimization processes described in \cref{fig:pipeline} is the addition of noise to the image generated by the 2D diffusion model. Noisy gradients are a common issue with SDS loss and, as shown in \cref{sec:approach:sds-analysis}, crafting a schedule for the noise bounds is important for consistently converging to good results. Our results shown in \cref{fig:annealing} match what we find in our analysis of the visualizations of the one-step denoised images and demonstrate that high noise levels during training tend to produce artifacts and multi-faced geometry. Intuitively, as the model converges, less noise should be added each step once the optimization has settled into a single local minimum. 



\begin{figure}[htbp]
\centering
  \includegraphics[width=0.85\linewidth]{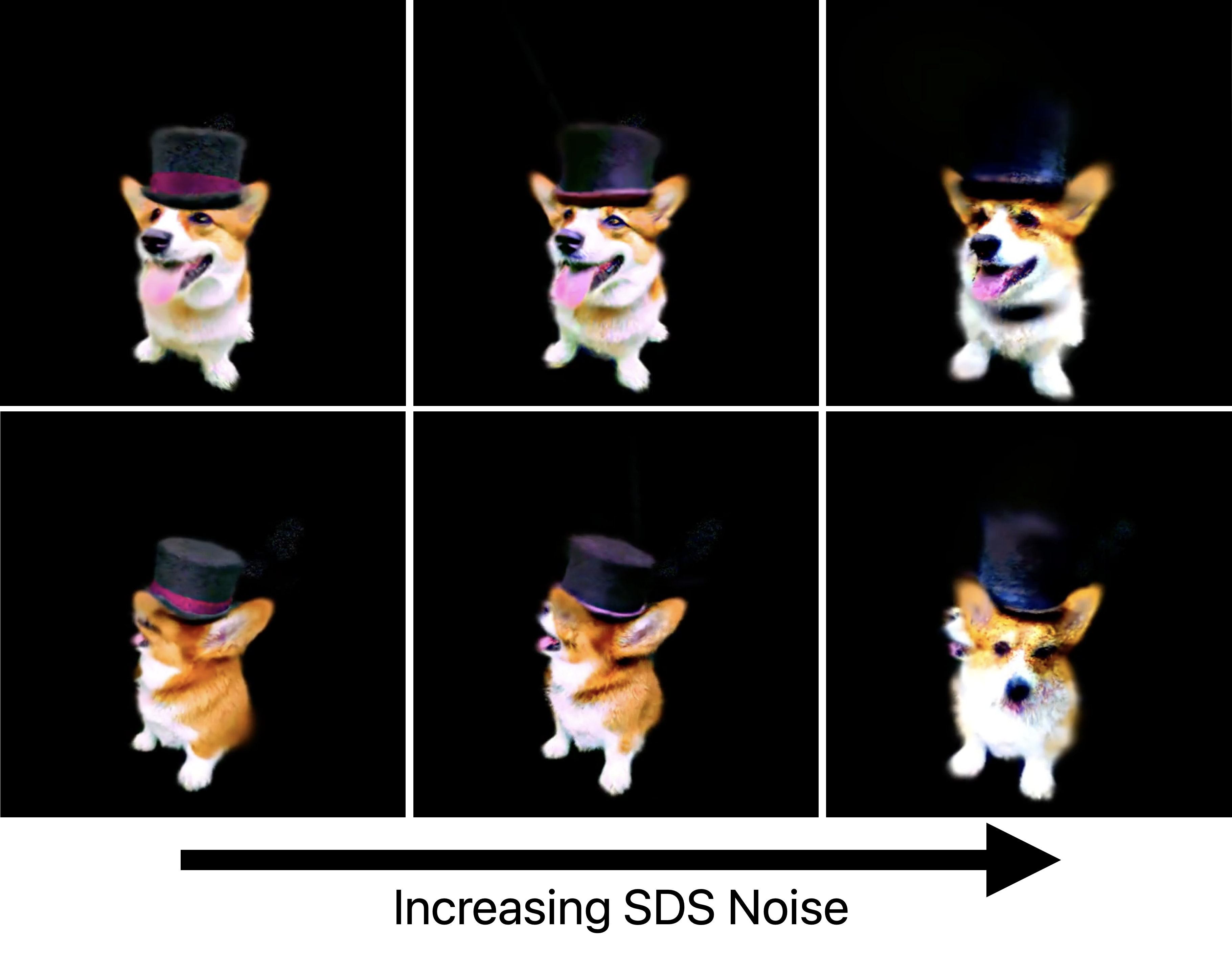}
  \vspace{-10pt}
  \caption{The upper and lower bounds of the noise being injected into $\vx_t$ in \cref{eqn:diffusion} change as a function of the training iteration. Larger noise levels give more high-frequency texture detail, but also more artifacts including multiple faces. The converged model is shown from the front (top row) and back (bottom row), with increasing levels of noise left-to-right. }
  \label{fig:annealing}
  \vspace{-0.2in}
\end{figure}



\section{Failure Analysis}
\label{sec:failure}

\begin{figure}[htbp]
  \includegraphics[width=\linewidth]{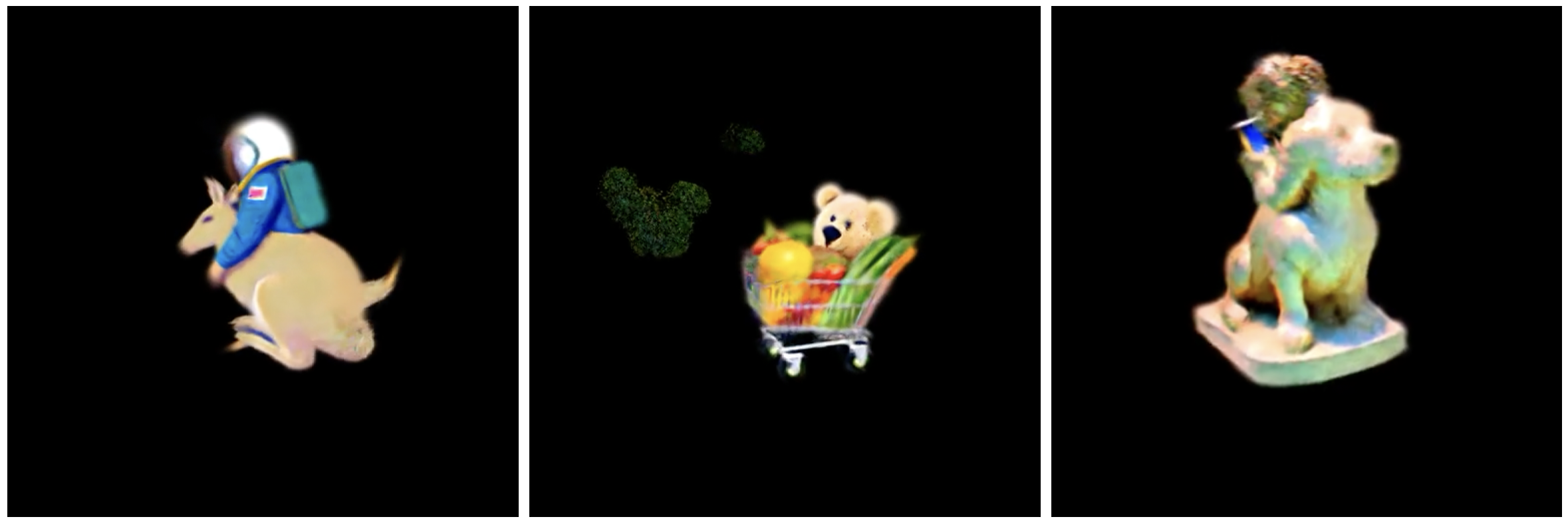}
  \vspace{-15pt}
  \caption{Failure cases: ``An astronaut riding a kangaroo" with the astronaut being erroneously merged in; ``A teddy bear pushing a shopping cart full of fruits and vegetables" with floaters; and ``Michelangelo style statue of dog reading news on a cellphone" with multi-face and blurry geometries.}
  \label{fig:failure_case}
  \vspace{-0.2in}
\end{figure}

While our strategies are shown to reduce multi-face geometry, there remain scenarios where these methods do not yield satisfactory results, as illustrated in \cref{fig:failure_case}. For instance, some failures originate from the 2D diffusion model's inability to accurately interpret the prompt, while others produce floating or blurry geometries. Multi-face geometry also still exists for certain prompts.

\section{Conclusion}
\label{sec:conclusion}
In this work, we introduce \OURS, a text-to-3D framework that addresses the blurry appearance and multi-faced geometry problems that are commonly seen in prior methods.  Our analysis reveals that the Score Distillation Sampling loss can be reparametrized as a supervised reconstruction loss using denoised images as pseudo-ground-truth. This finding leads to intuitive ways to visually inspect the training dynamics and the formulation noise level annealing strategies that reduce the occurrence of multi-face artifacts. Empirical results show that image-space diffusion assists in generating better geometry while latent-space diffusion produces vibrant and detailed colors, inspiring our dual-phase training scheme. Notably, both the reparametrization and training schemes are agnostic to the underlying 3D representations and generalize beyond \GS.  However, to enhance detail and construction fidelity, we adopt a \GS as our core 3D representation, including a number of strategies involving initialization and density control to enhance the robustness and convergence speed toward accurate geometric representations. Our empirical study demonstrates the superior quality of our method in comparison to previous approaches.

\clearpage
{
    \small
    \bibliographystyle{ieeenat_fullname}
    \bibliography{main}
}

\clearpage
\clearpage
\setcounter{page}{1}
\setcounter{section}{0}
\setcounter{figure}{0}
\renewcommand{\thesection}{Appendix \Alph{section}}
\renewcommand{\thefigure}{app-\arabic{figure}}

\maketitlesupplementary

\section{Additional Visualization}
\label{app:additional_vis}
\cref{fig:multiview_comparison_big} shows additional result comparison with different view of angles.
\OURS is able to generate the 3D model with both detailed texture and geometry compared to the baseline methods.

\begin{figure*}[hbtp]
  \centering
  \includegraphics[width=\textwidth]{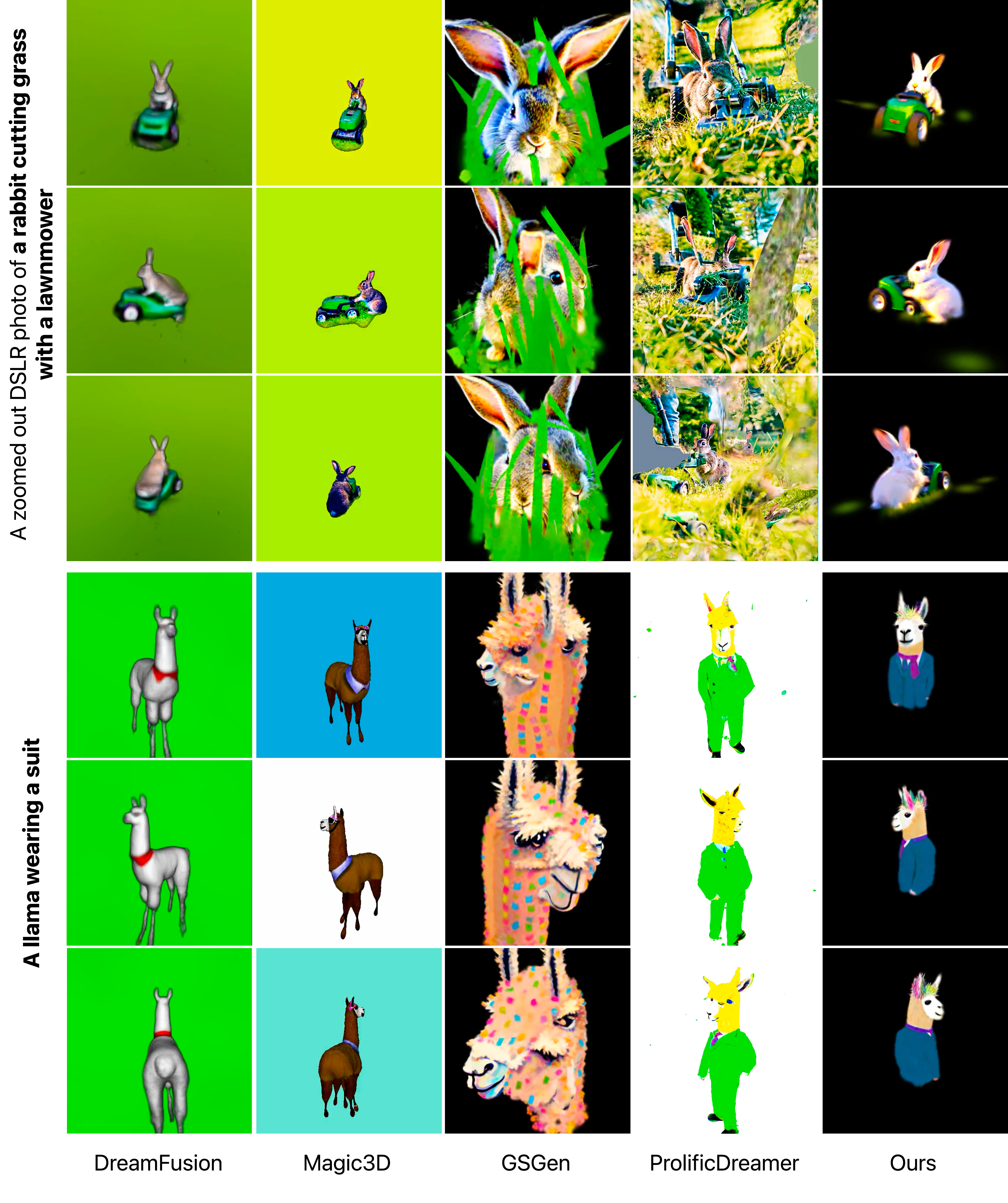} 
  \caption{Multi-view comparison against prior methods. Each column shows the generated object from 3 different views roughly equally spaced about the vertical axis. GSGen and ProlificDreamer struggle to produce 3D view-consistent geometry. DreamFusion and Magic3D do not have released code so we use the open-source Threestudio implementation~\cite{threestudio2023}. }
  \label{fig:multiview_comparison_big}
\end{figure*}




\section{Density Control Setup} \label{app:density-control-details}
\cref{fig:stage-1-schedule} shows an illustration of our density control setup.
To assist with the convergence of the geometry of the scenes, we use the following schedule to modify the \GS. 
Firstly, we randomly initialize 1000 points based on the aforementioned initialization scheme. 
As shown in \ref{app:ablation-density-control}, we intend to use less starting points to reduce the floaters and produce better geometry. 
Then, for every 500 iterations we apply a densification process based on the original Gaussian splatting method~\cite{kerbl3Dgaussians}. 
More specifically, we split and clone the Gaussians when the magnitude of the position gradient is over a threshold. 
By doing so, we can allow the representation to better capture fine details. 
Please refer to the original paper~\cite{kerbl3Dgaussians} for more details of the densification algorithm. 
Note that we start this densification process after 100 iterations. 
This is to make sure the averaged positional gradients get stabilized. 
Similar to the original method, we also apply periodic pruning immediately after densification to remove the Guassians with smaller opacities or large 2D projected area. 
In addition, as shown in the ablation study in \ref{app:exp-ablation-density-control}, we found that resetting the opacities at the early training stage can help to reduce the floaters in the final result. 
In our setup, we choose to reset the opacities at the 1000th iteration.
This is due to the positions and other attributes of the primitives have begin to converge before 1000 iteration, and resetting this parameters allows for a more robust convergence by preventing the optimization from getting caught in the initial local minima (e.g., floaters or bad geometry). 
The density control process ends at 12000 iterations; we then proceed with 3000 fine-tuning iterations with a fixed number of \GS to smooth out the spiky artifacts introduced by densification.

\section{Ablation on Density Control} \label{app:ablation-density-control}
As shown in Figure~\ref{fig:stage-1-schedule}, to assist with the convergence of the geometry of the scenes, we use the following schedule to modify the \GS. Firstly, we randomly initialize 1000 points based on the aforementioned initialization scheme. Then, every 500 iterations we apply a densification process based on the original Gaussian splatting method~\cite{kerbl3Dgaussians}. More specifically, we split and clone the Gaussians when the magnitude of the position gradient is over a threshold. By doing so, we can allow the representation to better capture fine details. Please refer to the original paper~\cite{kerbl3Dgaussians} for more details of the densification algorithm. Note that we start this densification process after 100 iterations. This is to make sure the averaged positional gradients get stabilized.

\paragraph{Initialization.} As shown in \cref{fig:init}, starting with fewer points and annealing the initial opacity of the Gaussians results in the best geometry.
More specifically, comparing the results from the same row, the results with opacity decay in the right column (\ie, linearly decaying opacity based on the distance to the origin) have less floaters. 
Furthermore, comparing the results from the same column, with more starting points (from top to bottom), there are more floaters and the training become unstable if we initialize with a large amount of points due to the noisy signal from SDS loss (see the figure on the bottom left).

\paragraph{Density control and position learning}\label{app:exp-ablation-density-control}
In our experiments, we found that resetting opacity for all of the Gaussians during densification can help to reduce floaters. As shown in Figure~\ref{fig:ablation_opacity_reset}, with opacity reset, there are much less floaters in the final result (bottom) compared with the case without opacity reset (top). Note that, in our experiment, we choose to reset the opacity to 0.005 at the iteration of 1000 based on grid search. 

Besides opacity reset, we also found the representation of 3D Gaussians is very sensitive to the learning rate of the positions (\ie, xyz coordinates). As shown in Figure~\ref{fig:ablation_positional_lr}, with a slightly large learning rate (0.0064), the geometry gets diverged due to the diversification process. This is aligned with the result from original 3D Gaussians paper~\cite{kerbl3Dgaussians}. Even under their reconstruction task, which has more regularization (\ie image supervision) comparing with our generation task, the original method still uses a really small position learning rate as 0.00064, which essentially does not allow the centroids of the 3D Gaussians moving much. Instead, the fine geometry is forced to be learned by density control (densification and pruning). 

\begin{figure*}[ht]
  \includegraphics[width=\textwidth]{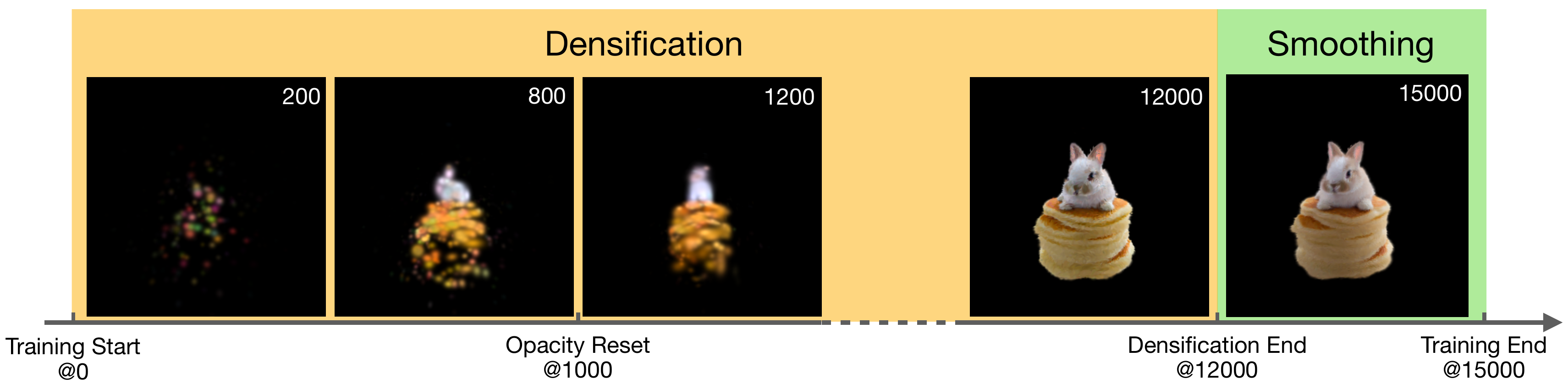}
  \caption{Density control schedule. We randomly initialize points and apply density control (densification and pruning) to obtain the coarse geometry and texture. Then an additional smoothing step is followed in order to remove the spiky artifacts as introduced by densification.}
  \label{fig:stage-1-schedule}
  \vskip -0.2in
\end{figure*}


\section{Ablation on Two-Stage Training} \label{app:ablation-two-stage-training}

\paragraph{Benefit from the coarse-to-fine training paradigm.} 
\cref{fig:ablation_gs_stage_1} shows the first stage result (\ie, training from scratch) using Stable Diffusion model (left) and DeepFloyd model (right) for both of the geometry primitives \GS and iNGP. 
Although there is a sharper texture from the high-resolution Stable Diffusion model, the overall geometry is worse than the result from the coarse DeepFloyd model.
As shown in~\cref{fig:ablation_gs_stage_2}, after finetuning with the diffusion models trained with high resolution images (Stable Diffusion or DeepFloyd with super-resolution module), we can get a 3D model with much higher fidelity, while also keeps the good geometry that is learned from the first stage.

\begin{figure}[!hp]
  \vskip -0.05in
  \includegraphics[width=\linewidth]{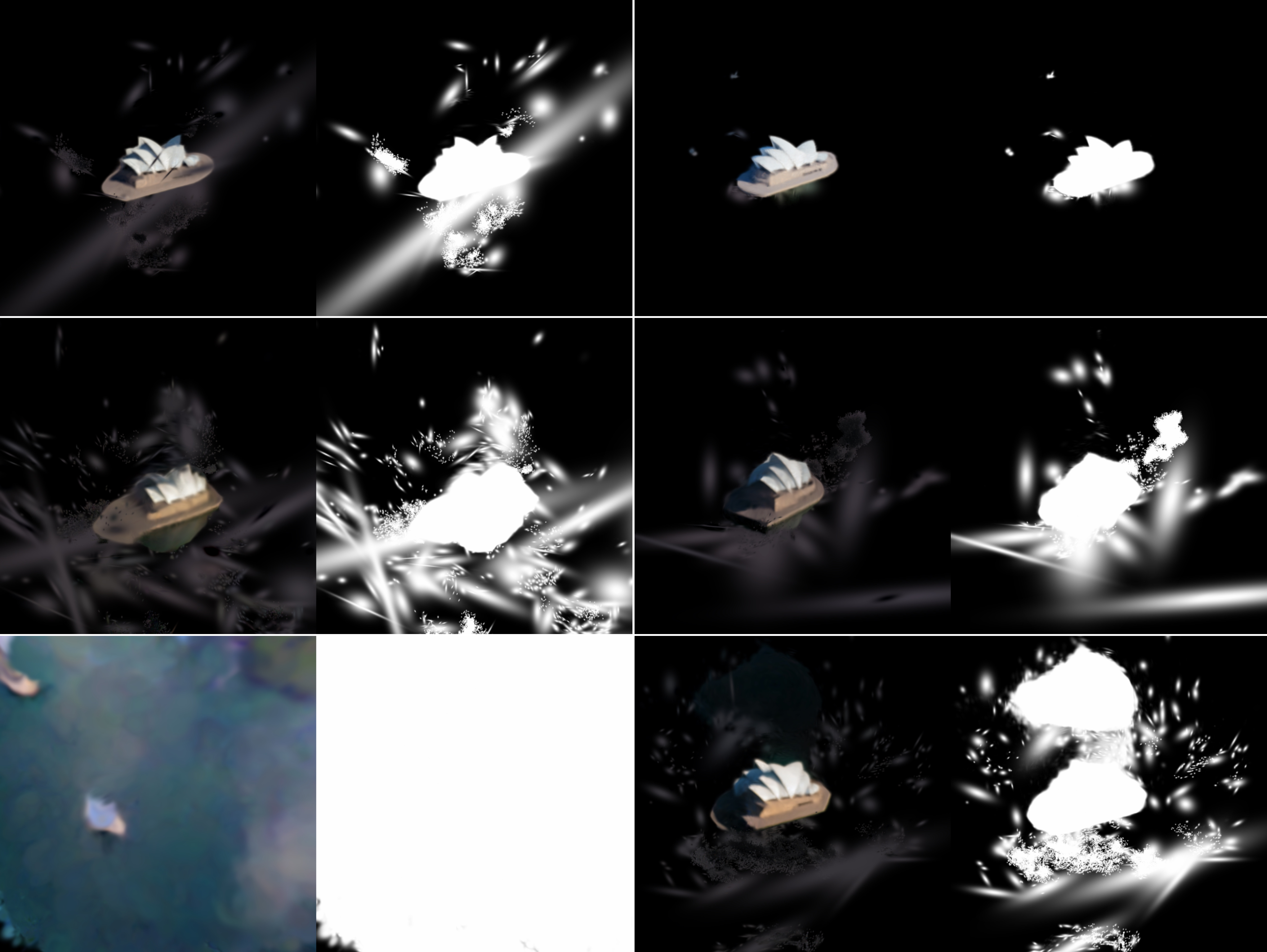}
  \caption{Ablation study for \GS initialization schemes with prompt: \textit{a zoomed out DSLR photo of the Sydney opera house, aerial view}. Left Column: Fix initial opacity levels. Right Column: Opacity initialization based on distance to center of scene. Top Row: 1K starting points. Middle Row: 10K starting points. Bottom Row: 100K starting points. }
  \label{fig:init}
  \vskip -0.25in
\end{figure}
\paragraph{Benefit of the use of latent-space diffusion model in the second stage learning.} 
As shown in~\cref{fig:ablation_gs_stage_2}, when finetuning from the first stage model trained with the coarse DeepFloyd model, both Stable Diffusion and DeepFloyd with super-resolution module can achieve better geometry and texture, as they are trained with high resolution images. 
However, if we compare the resulting images, (\eg, the texture of basket and the fine hairs from bunny) the DeepFloyd result is lacking details, while the Stable Diffusion model can produce both better texture and sharper geometry.
As mentioned earlier, this is due to the image-based guidance (\ie, DeepFloyd) has more adverse effect to the view consistency of the 3D model, while the guidance from the latent-space diffusion model (\ie, Stable Diffusion) is less sensitive due to the feature compression from its image encoder. 

\begin{figure}[!hp]
  \includegraphics[width=\linewidth]{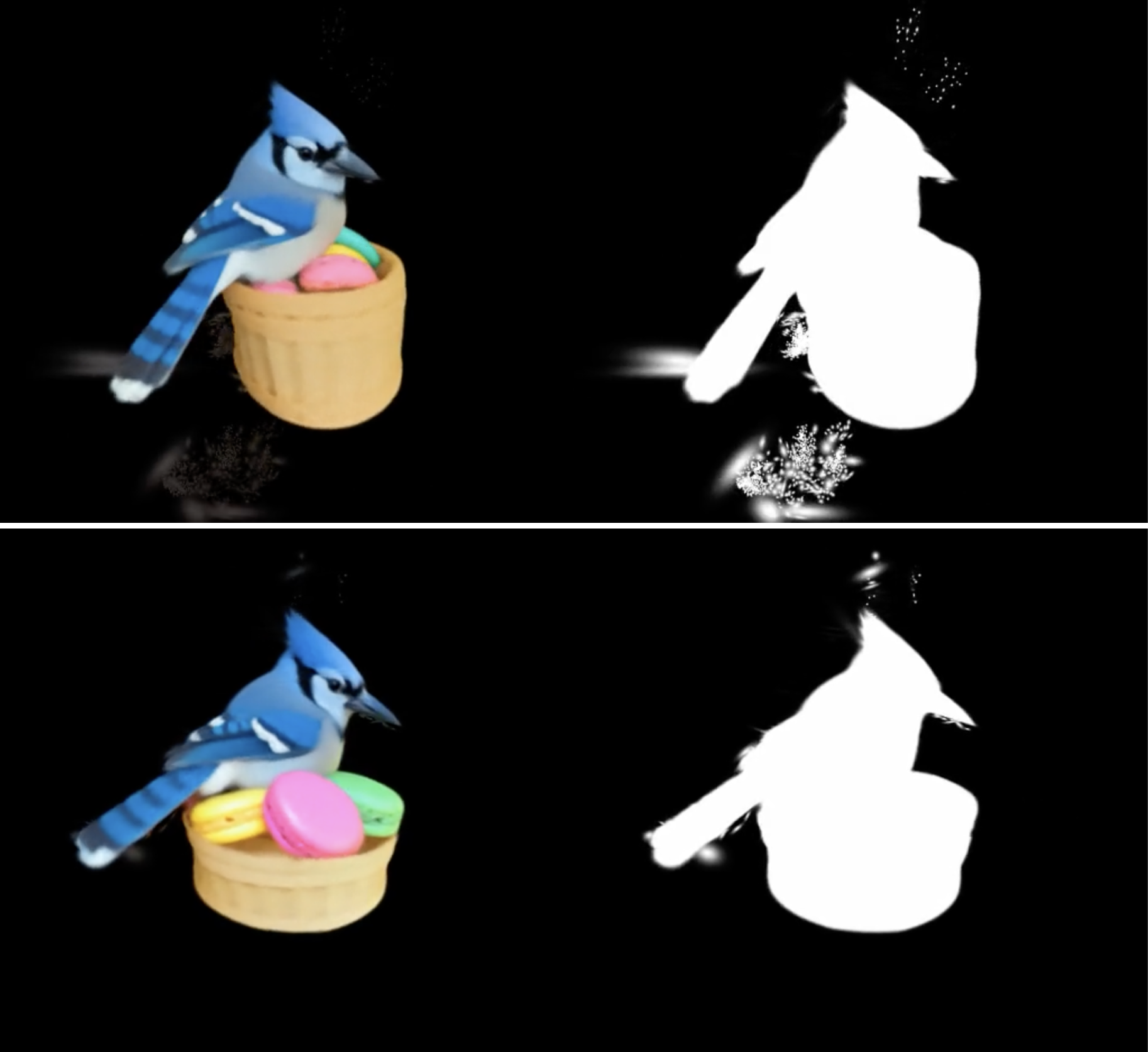}
  \caption{Resetting opacity during densification can help reduce floaters as shown in the opacity renderings on the right. Top: without opacity reset; bottom: with opacity reset.}
  \label{fig:ablation_opacity_reset}
  \vskip -0.2in
\end{figure}

\begin{figure}[!hp]
  \includegraphics[width=\linewidth]{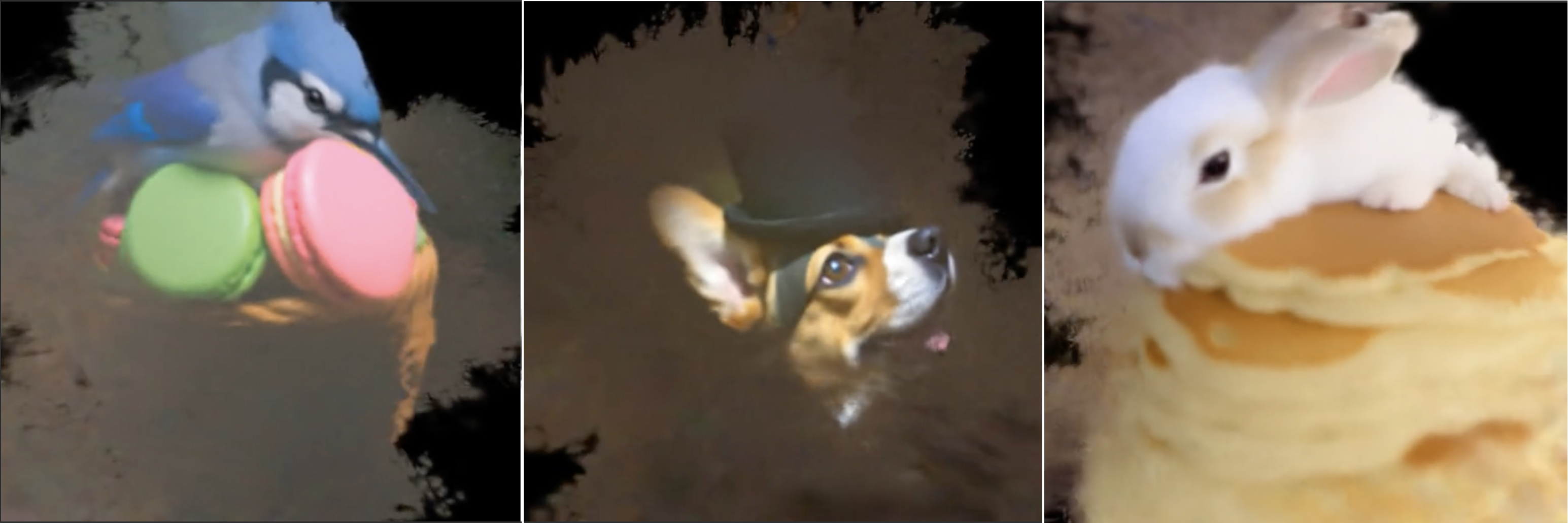}
  \caption{Using an inappropriate learning rate for position updates can readily lead to geometric divergence.}
  \label{fig:ablation_positional_lr}
  \vskip -0.5in
\end{figure}

\begin{figure*}[htbp]
  \includegraphics[width=\linewidth]{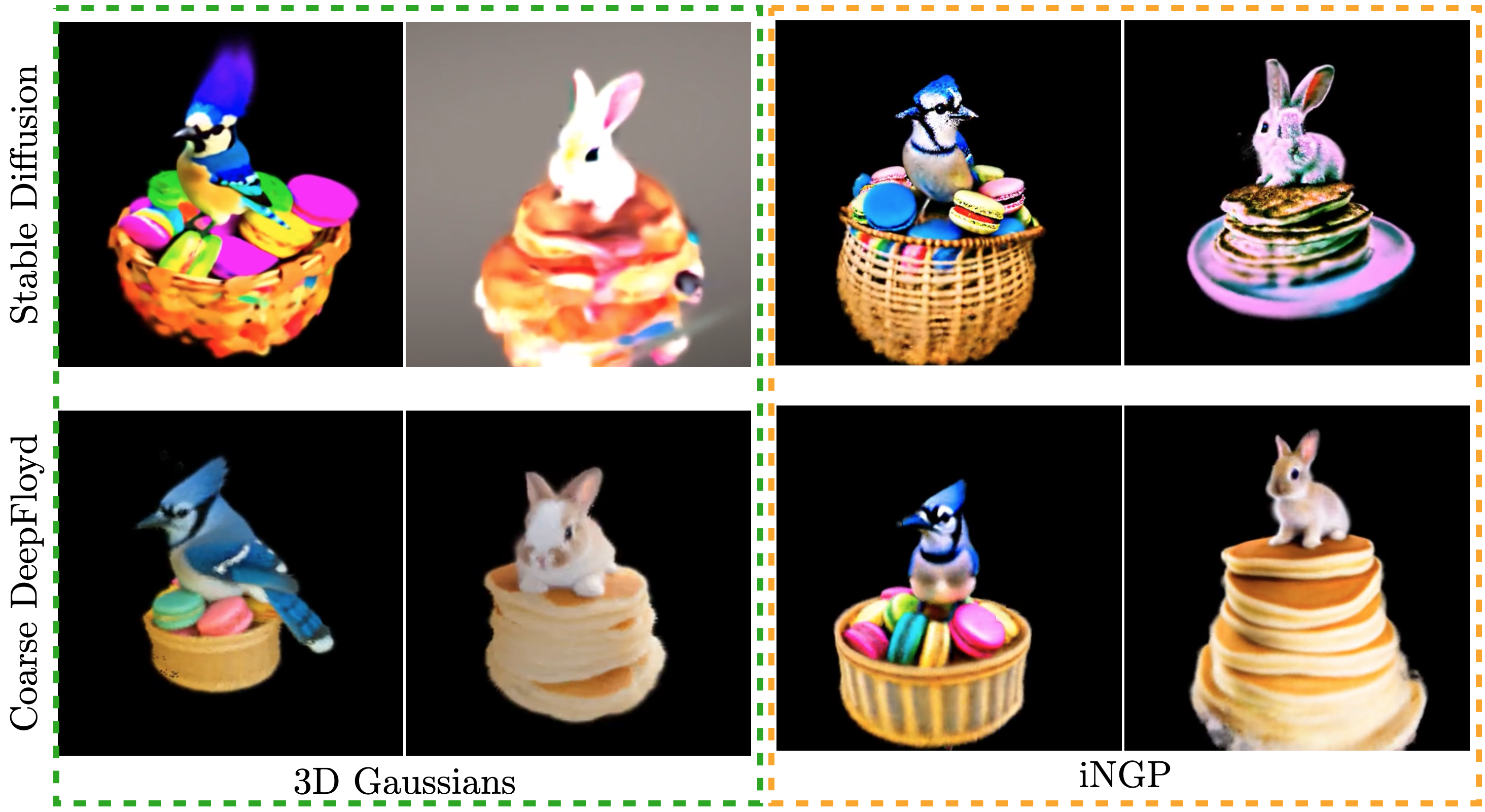}
  \caption{Result from different diffusion models when training from scratch.}
  \label{fig:ablation_gs_stage_1}
\end{figure*}

\begin{figure*}[htbp]
  \includegraphics[width=\linewidth]{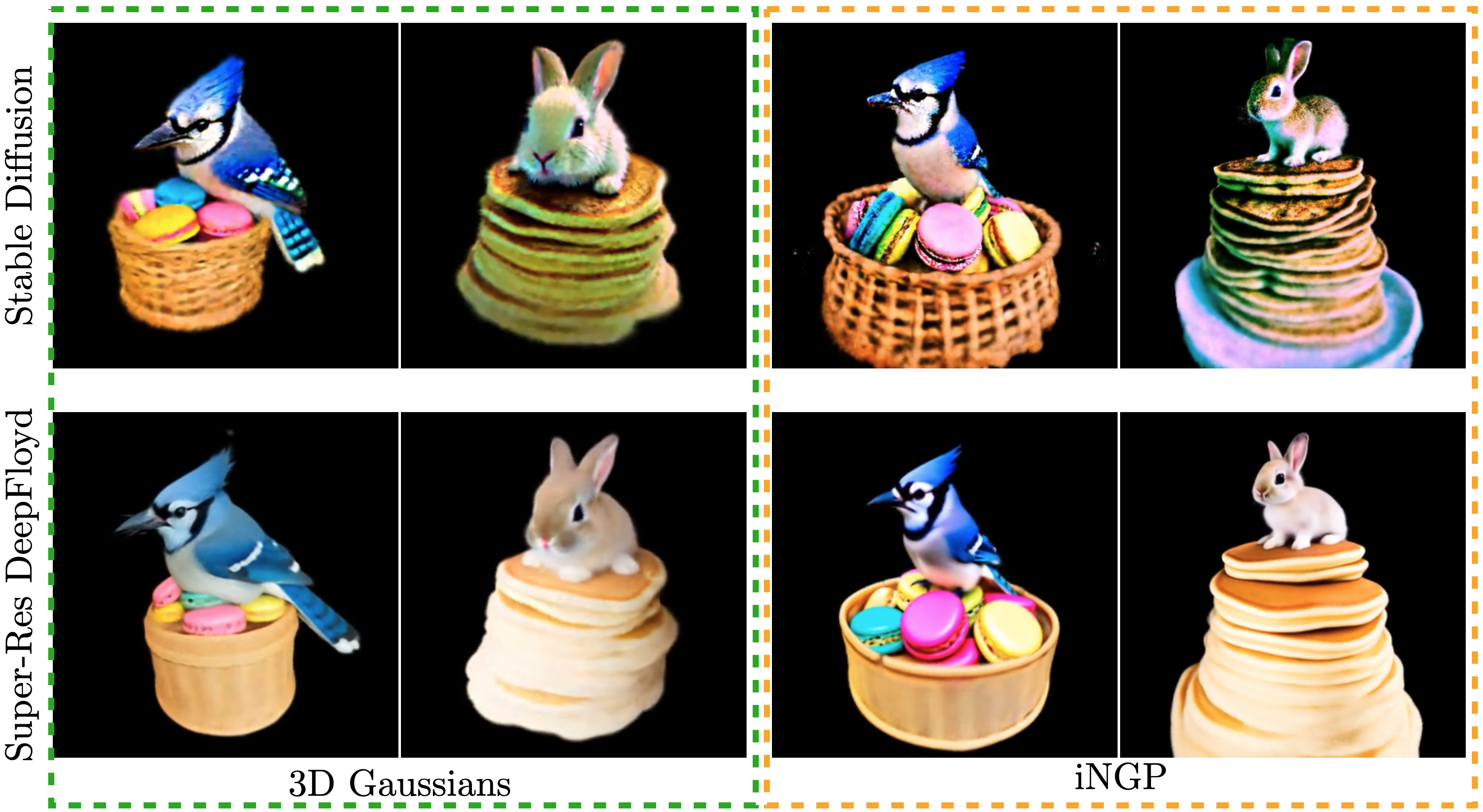}
  \caption{Result from different diffusion models when finetuning from the first stage model.}
  \label{fig:ablation_gs_stage_2}
\end{figure*}

\end{document}